\newcommand{\diffusionmodel}{{\boldsymbol{\epsilon}_{\theta}}}
\def\eqref#1{equation~\ref{#1}}
\def\1{\bm{1}}
\def\rx{{\textnormal{x}}}
\def\rvx{{\mathbf{x}}}
\def\ervx{{\textnormal{x}}}
\def\vx{{\bm{x}}}
\def\mA{{\bm{A}}}
\def\mB{{\bm{B}}}
\def\mH{{\bm{H}}}
\def\mI{{\bm{I}}}
\def\mM{{\bm{M}}}
\def\mX{{\bm{X}}}
\DeclareMathAlphabet{\mathsfit}{\encodingdefault}{\sfdefault}{m}{sl}
\SetMathAlphabet{\mathsfit}{bold}{\encodingdefault}{\sfdefault}{bx}{n}
\def\gG{{\mathcal{G}}}
\newcommand{\E}{\mathbb{E}}
\newcommand{\R}{\mathbb{R}}
\newcommand{\Var}{\mathrm{Var}}
\newcommand{\parents}{Pa} 
\DeclareMathOperator*{\argmin}{arg\,min}
\newtheorem{theorem}{Theorem}
\newtheorem{proposition}{Lemma}
\newtheorem{definition}{Definition}
\newtheorem*{remark}{Remark}
\title{Diffusion Models for Causal Discovery \\ via Topological Ordering}
\author{Pedro Sanchez$^1$\thanks{pedro.sanchez@ed.ac.uk},~ Xiao Liu$^1$,~Alison Q O'Neil$^{2,1}$,~Sotirios A. Tsaftaris$^{1,3}$\\
$^1$The University of Edinburgh \\
$^2$Canon Medical Research Europe \\
$^3$The Alan Turing Institute\\
}
\begin{document}

\maketitle
\setcounter{footnote}{0} 
\begin{abstract}


Discovering causal relations from observational data becomes possible with additional assumptions such as considering the functional relations to be constrained as nonlinear with additive noise (ANM). Even with strong assumptions, causal discovery involves an expensive search problem over the space of directed acyclic graphs (DAGs). \emph{Topological ordering} approaches reduce the optimisation space of causal discovery by searching over a permutation rather than graph space.
For ANMs, the \emph{Hessian} of the data log-likelihood can be used for finding leaf nodes in a causal graph, allowing its topological ordering. However, existing computational methods for obtaining the Hessian still do not scale as the number of variables and the number of samples are increased. Therefore, inspired by recent innovations in diffusion probabilistic models (DPMs), we propose \emph{DiffAN}\footnote{Implementation is available at \url{https://github.com/vios-s/DiffAN} .}, a topological ordering algorithm that leverages DPMs for learning a Hessian function. We introduce theory for updating the learned Hessian without re-training the neural network, and we show that computing with a subset of samples gives an accurate approximation of the ordering, which allows scaling to datasets with more samples and variables. We show empirically that our method scales exceptionally well to datasets with up to $500$ nodes and up to $10^5$ samples while still performing on par over small datasets with state-of-the-art causal discovery methods.

\end{abstract}

\section{Introduction}

\begin{wrapfigure}{r}{0.57\textwidth}
\vspace{-2cm}
  \centering
    \includegraphics[width=0.53\textwidth]{ 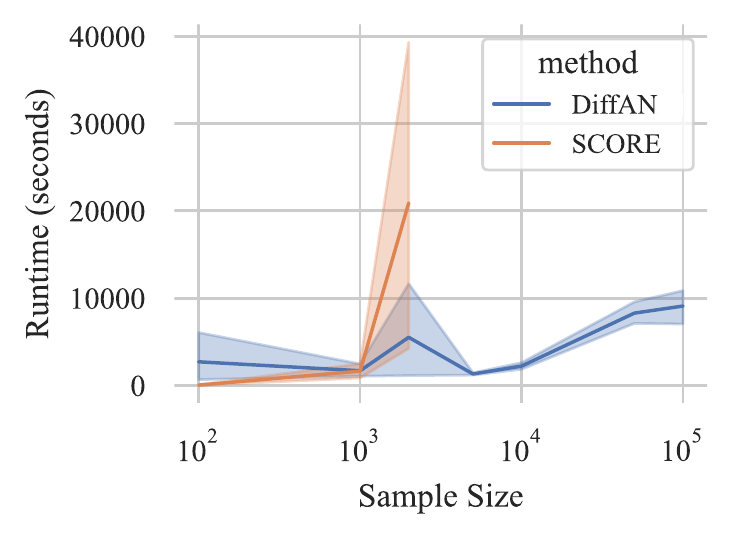}
  \vspace{-0.5cm}
  \caption{Plot showing run time in seconds for different sample sizes, for discovery of causal graphs with 500 nodes. Most causal discovery methods have prohibitive run time and memory cost for datasets with many samples; the previous state-of-the-art SCORE algorithm \citep{Rolland2022ScoreModels} which is included in this graph cannot be computed beyond 2000 samples in a machine with 64GB of RAM. By contrast, our method DiffAN has a reasonable run time even for numbers of samples two orders of magnitude larger than capable by most existing methods.}
  \vspace{-1.05cm}
  \label{fig:RuntimeVsSample}
\end{wrapfigure}

Understanding the causal structure of a problem is important for areas such as economics, biology \citep{Sachs2005CausalData} and healthcare \citep{Sanchez2022CausalMedicine}, especially when reasoning about the effect of interventions. When interventional data from randomised trials are not available, causal discovery methods \citep{Glymour2019ReviewModels} may be employed to discover the causal structure of a problem solely from observational data. Causal structure is typically modelled as a directed acyclic graph (DAG) $\gG$ in which each node is associated with a random variable and each edge represents a causal mechanism i.e.~how one variable influences another. 

However, learning such a model from data is NP-hard \citep{Chickering1996LearningNP-Complete}. Traditional methods search the DAG space by testing for conditional independence between variables \citep{Spirtes1993CausationSearch} or by optimising some goodness of fit measure \citep{Chickering2002OptimalSearch}. Unfortunately, solving the search problem with a greedy combinatorial optimisation method can be expensive and does not scale to high-dimensional problems.

In line with previous work \citep{Teyssier2005Ordering-BasedNetworks,Park2017BayesianOrder,Buhlmann2014CAM:REGRESSION,Solus2021ConsistencyAlgorithms,Wang2021Ordering-BasedLearning,Rolland2022ScoreModels}, we can speed up the combinatorial search problem over the space of DAGs by rephrasing it as a \textit{topological ordering} task, ordering from leaf nodes to root nodes. The search space over DAGs with $d$ nodes and $(d^2 - d)/2$ possible edges is much larger than the space of permutations over $d$ variables. Once a topological ordering of the nodes is found, the potential causal relations between later (cause) and earlier (effect) nodes can be pruned with a feature selection algorithm (e.g. \cite{Buhlmann2014CAM:REGRESSION}) to yield a graph which is naturally directed and acyclic without further optimisation. 

Recently, \citet{Rolland2022ScoreModels} proposed the SCORE algorithm for topological ordering. 
SCORE uses the Hessian of the data log-likelihood, $\nabla_\rx^2 \log p(\rvx)$.
By verifying which elements of $\nabla_\rx^2 \log p(\rvx)$'s diagonal are constant across all data points, leaf nodes can be iteratively identified and removed.{ \citet{Rolland2022ScoreModels} estimate the Hessian point-wise with a second-order Stein gradient estimator \citep{Li2018GradientModels} over a radial basis function (RBF) kernel. However, point-wise estimation with kernels scales poorly to datasets with large number of samples $n$ because it requires inverting a $n \times n$ kernel matrix.  

Here, we enable \textit{scalable} causal discovery by utilising neural networks (NNs) trained with denoising diffusion instead of \citeauthor{Rolland2022ScoreModels}'s kernel-based estimation. We use the ordering procedure, based on \citet{Rolland2022ScoreModels}, which requires re-computing the score's Jacobian at each iteration. Training NNs at each iteration would not be feasible. Therefore, we derive a theoretical analysis that allows updating the learned score without re-training.}
In addition, the NN is trained over the entire dataset ($n$ samples) but only a subsample is used for finding leaf nodes.
Thus, once the score model is learned, we can use it to order the graph with constant complexity on $n$, enabling causal discovery for large datasets in high-dimensional settings. 
Interestingly, our algorithm does not require architectural constraints on the neural network, as in previous causal discovery methods based on neural networks \citep{Lachapelle2020Gradient-BasedLearning,Zheng2020LearningDAGs,Yu2019DAG-GNN:Networks,Ng2022MaskedLearning}. Our training procedure does not learn the causal mechanism directly, but the score of the data distribution.


\noindent\textbf{Contributions.}~In summary, we propose DiffAN, an identifiable algorithm leveraging a diffusion probabilistic model for topological ordering that enables causal discovery assuming an additive noise model:
\begin{enumerate*}[label=(\roman*)]
  \item To the best of our knowledge, we present the first causal discovery algorithm based on denoising diffusion training {  which allows \textit{scaling} to datasets with up to $500$ variables and $10^5$ samples}. The score estimated with the diffusion model is used to find and remove leaf nodes iteratively;
  \item We estimate the second-order derivatives (score's Jacobian or Hessian) of a data distribution using neural networks with diffusion training via backpropagation;
  \item {  The proposed \textit{deciduous score} (Section \ref{sec:deciduous_score}) allows efficient causal discovery \textit{without} re-training the score model at each iteration.} When a leaf node is removed, the score of the new distribution can be estimated from the original score (before leaf removal) and its Jacobian. 
  
\end{enumerate*}

\section{Preliminaries}
\subsection{Problem Definition}
We consider the problem of discovering the causal structure between $d$ variables, given a probability distribution $p(\rvx)$ from which a $d$-dimensional random vector $\rvx = (\ervx_1, \dots, \ervx_d )$ can be sampled. We assume that the true causal structure is described by a DAG $\gG$ containing $d$ nodes. Each node represents a random variable $\ervx_i$ and edges represent the presence of causal relations between them. In other words, we can say that $\gG$ defines a structural causal model (SCM) consisting of a collection of assignments $\ervx_i \coloneqq f_i(\parents(\ervx_i), \epsilon_i)$, where $\parents(\ervx_i)$ are the parents of $\ervx_i$ in $\gG$, and $\epsilon_i$ is a noise term independent of $\ervx_i$, also called exogenous noise. $\epsilon_i$ are i.i.d.~from a smooth distribution $p^\epsilon$. The SCM entails a unique distribution $p(\rvx) = \prod_{i=1}^d p(\ervx_i \mid \parents(\ervx_i))$ over the variables $\rvx$ \citep{Peters2017ElementsInference}. The observational input data are $\mX \in \R^{n\times d}$, where $n$ is number of samples. The target output is an adjacency matrix $\mA \in \R^{d\times d}$.

The \textbf{topological ordering} (also called \textit{causal ordering} or \textit{causal list}) of a DAG $\gG$ is defined as a non-unique permutation $\pi$ of $d$ nodes such that a given node always appears first in the list than its descendants. Formally,  $\pi_i < \pi_j \iff j \in  De_{\gG}(\ervx_i)$ where $De_{\gG}(\ervx_i)$ are the descendants of the $ith$ node in $\gG$ (Appendix B in \cite{Peters2017ElementsInference}).

\subsection{Nonlinear Additive Noise Models}
\label{sec:anms}
Learning a unique $\mA$ from $\mX$ with observational data requires additional assumptions. A common class of methods called additive noise models (ANM) \citep{Shimizu2006ADiscovery,Hoyer2008NonlinearModels,Peters2014CausalModels,Buhlmann2014CAM:REGRESSION} explores asymmetries in the data by imposing functional assumptions on the data generation process. In most cases, they assume that assignments take the form $\ervx_i \coloneqq f_i(\parents(\ervx_i)) + \epsilon_i$ with $\epsilon_i \sim p^{\epsilon}$. Here we focus on the case described by \citet{Peters2014CausalModels} where $f_i$ is nonlinear. We use the notation $f_i$ for $f_i(\parents(\ervx_i))$ because the arguments of $f_i$ will always be $\parents(\ervx_i)$ throughout this paper. We highlight that $f_i$ does not depend on $i$. 

\textbf{Identifiability.}~We assume that the SCM follows an additive noise model (ANM) which is known to be identifiable from observational data \citep{Hoyer2008NonlinearModels,Peters2014CausalModels}. We also assume causal sufficiency, i.e.~there are no hidden variables that are a common cause of at least two observed variables. In addition, corollary 33 from \citet{Peters2014CausalModels} states that the true topological ordering of the DAG, as in our setting, is identifiable from a $p(\rvx)$ generated by an ANM without requiring causal minimality assumptions.

\textbf{Finding Leaves with the Score.}~\citet{Rolland2022ScoreModels} propose that the score of an ANM with distribution $p(\rvx)$ can be used to find leaves\footnote{We refer to nodes without children in a DAG $\gG$ as leaves.}. Before presenting how to find the leaves, we derive, following Lemma 2 in \citet{Rolland2022ScoreModels}, an analytical expression for the score which can be written as
\begin{equation}
\label{eq:score_analytic}
\begin{aligned}
    \nabla_{\ervx_j} \log p(\rvx) &= \nabla_{\ervx_j} \log \prod_{i=1}^d p(\ervx_i \mid \parents(\ervx_i)) & \\
    &= \nabla_{\ervx_j} \sum_{i=1}^d \log p(\ervx_i \mid \parents(\ervx_i))  & \\ 
    &= \nabla_{\ervx_j} \sum_{i=1}^d \log p^{\epsilon} \left(\ervx_i - f_i \right) &  	\triangleright~ \text{Using } \epsilon_i = \ervx_i - f_i \\
    &= \frac{\partial \log p^{\epsilon} \left(\ervx_j - f_j \right)}{\partial \ervx_j}  - \sum_{i \in Ch(\ervx_j)}  \frac{\partial f_i}{\partial \ervx_j} \frac{\partial \log p^{\epsilon} \left(\ervx_i - f_i \right)}{\partial x}. &
\end{aligned}
\end{equation}
Where $Ch(\ervx_j)$ denotes the children of $\ervx_j$. { We now proceed, based on \citet{Rolland2022ScoreModels}, to derive a condition which can be used to find leaf nodes.


\begin{proposition}
\label{prop:indentifiable_score_general}
Given a nonlinear ANM with a noise distribution $p^\epsilon$ and a leaf node $j$; assume that $\frac{\partial^2 \log p^\epsilon}{\partial x^2} = a$, where $a$ is a constant, then
\begin{equation}
\label{eq:var_hessian}
    \Var_\mX \left[\mH_{j,j} (\log p(\rvx)) \right] = 0.
\end{equation}
\end{proposition}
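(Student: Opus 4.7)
The plan is to start directly from the analytical expression for the score given in Equation~\ref{eq:score_analytic} and specialize it to the leaf case, then differentiate once more. Since $j$ is a leaf node by assumption, $Ch(\ervx_j) = \emptyset$ and the sum over children in Equation~\ref{eq:score_analytic} disappears, leaving
\begin{equation*}
\nabla_{\ervx_j} \log p(\rvx) \;=\; \frac{\partial \log p^{\epsilon}(\ervx_j - f_j)}{\partial \ervx_j}.
\end{equation*}

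Next I would take $\partial/\partial \ervx_j$ of this expression to obtain the diagonal Hessian entry $\mH_{j,j}(\log p(\rvx))$. The key structural fact I would invoke is that, because $\gG$ is a DAG and $f_j$ is a function of $\parents(\ervx_j)$ only, $\ervx_j$ itself is not among the arguments of $f_j$, so $\partial f_j / \partial \ervx_j = 0$. Applying the chain rule with this observation gives
\begin{equation*}
\mH_{j,j}(\log p(\rvx)) \;=\; \frac{\partial^2 \log p^{\epsilon}(\ervx_j - f_j)}{\partial \ervx_j^2} \;=\; (\log p^{\epsilon})''(\ervx_j - f_j).
\end{equation*}

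Finally I would use the hypothesis that $\partial^2 \log p^{\epsilon}/\partial x^2 = a$ is a constant independent of its argument. Substituting yields $\mH_{j,j}(\log p(\rvx)) = a$ for every realization of $\rvx$, and therefore $\Var_\mX[\mH_{j,j}(\log p(\rvx))] = \Var_\mX[a] = 0$, which is the claim.

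I do not expect any real obstacle here: the proof is essentially bookkeeping, with two small points worth being explicit about. The first is justifying that the children sum vanishes, which follows immediately from the leaf assumption. The second, and the only subtlety, is making clear that $f_j$ has no dependence on $\ervx_j$ so that differentiating $\log p^{\epsilon}(\ervx_j - f_j)$ twice really produces $(\log p^{\epsilon})''$ evaluated at $\ervx_j - f_j$ with no extra terms from $\partial f_j/\partial \ervx_j$; this uses the acyclicity of $\gG$. Everything else is a direct consequence of the constant-second-derivative hypothesis on $\log p^{\epsilon}$.
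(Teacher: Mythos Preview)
Your argument for the stated direction is correct and matches the paper's: drop the children sum from Equation~\ref{eq:score_analytic} because $j$ is a leaf, differentiate once more using $\partial f_j/\partial\ervx_j=0$ (acyclicity), and conclude $\mH_{j,j}(\log p(\rvx))=(\log p^\epsilon)''(\ervx_j-f_j)=a$, hence zero variance.

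Note, however, that the paper's proof of this lemma goes on to establish the converse as well (if $j$ is \emph{not} a leaf then $\mH_{j,j}$ cannot be constant), arguing by contradiction: assuming $\mH_{j,j}(\log p(\rvx))=c$ for a non-leaf $j$, one writes $\nabla_{\ervx_j}\log p(\rvx)=c\,\ervx_j+g(\rvx_{-j})$, picks a child $\ervx_c$ of $\ervx_j$ that is not a parent of any other child of $\ervx_j$, isolates the $\ervx_c$-dependent terms on one side, and differentiates in $\ervx_c$; the hypothesis $\partial^2\log p^\epsilon/\partial x^2=a$ then forces $\partial f_c/\partial\ervx_j$ to be independent of $\ervx_j$, contradicting the nonlinearity assumption. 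This converse is exactly what makes the criterion usable for leaf \emph{identification} (cf.\ the Remark after the lemma and Equation~\ref{eq:find_leaf}). The statement as literally written only asks for the forward implication, which you have handled correctly, but the paper treats and proves the result as a biconditional, so you may wish to supply the other half too.
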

See proof in Appendix \ref{app:proof_indentifiable_score_general}.
\begin{remark}
Lemma \ref{prop:indentifiable_score_general} enables finding leaf nodes based on the diagonal of the log-likelihood's Hessian.
\end{remark}
\citet{Rolland2022ScoreModels}, using a similar conclusion, propose a topological ordering algorithm that iteratively finds and removes leaf nodes from the dataset. At each iteration \citet{Rolland2022ScoreModels} re-compute the Hessian with a kernel-based estimation method. In this paper, we develop a more efficient algorithm for learning the Hessian at high-dimensions and for a large number of samples.
Note that \citet{Rolland2022ScoreModels} prove that Equation \ref{eq:var_hessian} can identify leaves in nonlinear ANMs with \textit{Gaussian noise}. We derive a formulation which, instead, requires the second-order derivative of the noise distribution to be constant. Indeed, the condition $\frac{\partial^2 \log p^\epsilon}{\partial x^2} = a$ is true for $p^\epsilon$ following a Gaussian distribution which is consistent with \citet{Rolland2022ScoreModels}, but could potentially be true for other distributions as well.}  

\subsection{Diffusion Models Approximate the Score}

The process of learning to denoise \citep{Vincent2011AAutoencoders} can approximate that of matching the score \citep{Hyvarinen2005EstimationMatching}. A diffusion process gradually adds noise to a data distribution over time. Diffusion probabilistic models (DPMs)~\cite{Sohl-Dickstein2015DeepThermodynamics,Ho2020DenoisingModels,Song2021Score-BasedEquations} learn to reverse the diffusion process, starting with noise and recovering the data distribution. The diffusion process gradually adds Gaussian noise, with a time-dependent variance $\alpha_t$, to a sample $\mathbf{x}_0 \sim p_{\text{data}}(\mathbf{x})$ from the data distribution. 
Thus, the noisy variable $\mathbf{x}_t$, with $t \in \left[ 0,T\right]$, is learned to correspond to versions of $\mathbf{x}_0$ perturbed by Gaussian noise following
$p \left(\mathbf{x}_t \mid \mathbf{x}_0\right)=\mathcal{N}\left(\mathbf{x}_t ; \sqrt{\alpha_{t}} \mathbf{x}_0,\left(1-\alpha_{t}\right) \mI\right)$, where $\alpha_{t}:=\prod_{j=0}^{t}\left(1-\beta_{j}\right)$, $\beta_{j}$ is the variance scheduled between $[\beta_{\text{min}}, \beta_{\text{max}}]$ and $\mI$ is the identity matrix. 
DPMs \citep{Ho2020DenoisingModels} are learned with a weighted sum of denoising score matching objectives at different perturbation scales with
\begin{equation}
\label{eq:trainingDDPM}
\theta^* = \underset{\theta}{\arg \min } ~ \E_{\rvx_0, t, \epsilon} \left[ \lambda(t) \left\| \diffusionmodel(\rvx_t, t)- \epsilon \right\|_{2}^{2}\right],
\end{equation}
where $\rvx_t = \sqrt{\alpha_t}\rvx_0 + \sqrt{1 - \alpha_t} \epsilon$, with $\rvx_0 \sim p(\rvx)$ being a sample from the data distribution, $t \sim \mathcal{U}\left(0, T\right)$ and $\epsilon \sim \mathcal{N}\left(0, \mI\right)$ is the noise. $\lambda(t)$ is a loss weighting term following \citet{Ho2020DenoisingModels}. 
\begin{remark}
Throughout this paper, we leverage the fact that the trained model $\diffusionmodel$ approximates the score $\nabla_{\ervx_j} \log p(\rvx)$ of the data \citep{Song2019GenerativeDistribution}. 
\end{remark}

\section{The Deciduous Score}
\label{sec:deciduous_score}
Discovering the complete topological ordering with the distribution's Hessian \citep{Rolland2022ScoreModels} is done by finding the leaf node (Equation \ref{eq:var_hessian}), appending the leaf node $\ervx_l$ to the ordering list $\pi$ and removing the data column corresponding to $\ervx_l$ from $\mX$ before the next iteration $d-1$ times. \citet{Rolland2022ScoreModels} estimate the score's Jacobian (Hessian) at each iteration. 

Instead, we explore an alternative approach that does not require estimation of a new score after each leaf removal. In particular, we describe how to adjust the score of a distribution after each leaf removal, terming this a ``deciduous score''\footnote{An analogy to \textit{deciduous} trees which seasonally shed leaves during autumn.}. We obtain an analytical expression for the deciduous score and derive a way of computing it, based on the original score before leaf removal. In this section, we only consider that $p(\rvx)$ follows a distribution described by an ANM, we pose no additional assumptions over the noise distribution.

\begin{definition}
Considering a DAG $\gG$ which entails a distribution $p(\rvx) = \prod_{i=1}^d p(\ervx_i \mid \parents(\ervx_i))$. Let $p(\rvx_{-l}) = \frac {p(\rvx)}{p(\ervx_l \mid \parents(\ervx_l))}$ be $p(\rvx)$ without the random variable corresponding to the leaf node $\ervx_l$. The \emph{deciduous} score $\nabla \log p(\rvx_{-l}) \in \R^{d-1}$ is the score of the distribution $p(\rvx_{-l})$.
\end{definition}

\begin{proposition}
\label{prop:delta_score}
Given a ANM which entails a distribution $p(\rvx)$, we can use Equation \ref{eq:score_analytic} to find an analytical expression for an additive residue $\Delta_l$ between the distribution's score $\nabla \log p(\rvx)$ and its \textit{deciduous} score $\nabla \log p(\rvx_{-l})$ such that
\begin{equation}
\Delta_l = \nabla \log p(\rvx) -\nabla \log p(\rvx_{-l}).
\end{equation}
In particular, $\Delta_l$ is a vector $\{ \delta_j \mid \forall j \in \left[ 1, \dots ,d \right] \backslash l \}$ where the residue w.r.t.~a node $\ervx_j$ can be denoted as
\begin{equation}
\label{eq:delta_l}
\begin{split}
    \delta_j &= \nabla_{\ervx_j} \log p(\rvx) - \nabla_{\ervx_j} \log p(\rvx_{-l}) \\
    &= -\frac{\partial f_i}{\partial \ervx_j} \frac{\partial \log p^{\epsilon} \left(\ervx_i - f_i \right)}{\partial x}.
\end{split}
\end{equation}
If $\ervx_j \notin \parents(\ervx_l)$, $\delta_j = 0$.
\end{proposition}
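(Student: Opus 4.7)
The plan is to derive $\delta_j$ by directly comparing the log-density factorisations of $p(\rvx)$ and $p(\rvx_{-l})$ and then applying the chain rule under the ANM assumption; no new probabilistic machinery is needed beyond what is already in Equation~\ref{eq:score_analytic}.

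First, I would exploit the fact that $\ervx_l$ is a leaf, so among the $d$ conditional factors of $p(\rvx) = \prod_{i=1}^d p(\ervx_i \mid \parents(\ervx_i))$, the coordinate $\ervx_l$ appears in exactly one factor, namely $p(\ervx_l \mid \parents(\ervx_l))$. Therefore, by definition of $p(\rvx_{-l})$,
\begin{equation*}
\log p(\rvx_{-l}) \;=\; \log p(\rvx) \;-\; \log p(\ervx_l \mid \parents(\ervx_l)) \;=\; \log p(\rvx) \;-\; \log p^{\epsilon}(\ervx_l - f_l),
\end{equation*}
where the second equality invokes the ANM assignment $\ervx_l = f_l + \epsilon_l$ with $\epsilon_l \sim p^{\epsilon}$.

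Second, I would differentiate both sides with respect to a coordinate $\ervx_j$ with $j \ne l$. Because $\ervx_l$ and $\ervx_j$ are distinct coordinates of $\rvx$, $\partial \ervx_l / \partial \ervx_j = 0$, so the chain rule applied to the univariate function $\log p^{\epsilon}$ evaluated at the residual $\ervx_l - f_l$ yields
\begin{equation*}
\nabla_{\ervx_j} \log p^{\epsilon}(\ervx_l - f_l) \;=\; -\frac{\partial f_l}{\partial \ervx_j}\,\frac{\partial \log p^{\epsilon}(\ervx_l - f_l)}{\partial x}.
\end{equation*}
Rearranging then produces exactly the expression claimed for $\delta_j$, with the index $i$ in the statement understood as the removed leaf $l$. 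The vanishing clause follows immediately, since $f_l$ is a function of $\parents(\ervx_l)$ alone, so $\partial f_l/\partial \ervx_j = 0$ whenever $\ervx_j \notin \parents(\ervx_l)$.

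As a sanity check I would re-derive $\delta_j$ directly from Equation~\ref{eq:score_analytic}: removing $\ervx_l$ deletes the single summand indexed by $i = l$ from the sum over $Ch(\ervx_j)$, and that deleted summand matches the formula above term by term. The derivation is essentially careful bookkeeping, so the only pitfall I anticipate is keeping the scalar derivative $\partial \log p^{\epsilon} / \partial x$ of the univariate noise log-density notationally distinct from the coordinate derivative $\nabla_{\ervx_j}$, so that the chain-rule factor $\partial f_l/\partial \ervx_j$ is counted exactly once.
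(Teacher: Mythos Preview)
Your proposal is correct and essentially mirrors the paper's proof. The paper argues directly from Equation~\ref{eq:score_analytic} that removing the leaf $\ervx_l$ deletes exactly the summand $i=l$ from the children sum in $\nabla_{\ervx_j}\log p(\rvx)$---which is precisely your ``sanity check''---while your primary derivation (differentiating the single dropped factor $\log p^{\epsilon}(\ervx_l - f_l)$) is the same computation taken one step earlier in the pipeline.
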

\begin{proof}
Observing Equation \ref{eq:score_analytic}, the score $\nabla_{\ervx_j} \log p(\rvx)$ only depends on the following random variables
\begin{enumerate*}[label=(\roman*)]
\item $\parents(\ervx_j)$, 
\item $Ch(\ervx_j)$, and 
\item $\parents(Ch(\ervx_j))$.
\end{enumerate*}
We consider $\ervx_l$ to be a leaf node, therefore $\nabla_{\ervx_j} \log p(\rvx)$ only depends on $\ervx_l$ if $\ervx_j \in \parents(\ervx_l)$. If $\ervx_j \in \parents(\ervx_l)$, the only term depending on $\nabla_{\ervx_j} \log p(\rvx)$ dependent on $\ervx_l$ is one of the terms inside the summation. 
\end{proof}
However, we wish to estimate the deciduous score $\nabla \log p(\rvx_{-l})$ without direct access to the function $f_l$, to its derivative, nor to the distribution $p^{\epsilon}$. Therefore, we now derive an expression for $\Delta_l$ using solely the score and the Hessian of $\log p(\rvx)$.
\begin{theorem}
\label{th:estimate_deciduous}
Consider an ANM of distribution $p(\rvx)$ with score $\nabla \log p(\rvx)$ and the score's Jacobian $\mH (\log p(\rvx))$. The additive residue $\Delta_l$ necessary for computing the deciduous score (as in Proposition \ref{prop:delta_score}) can be estimated with

\begin{equation}
\label{eq:estimate_delta}
\Delta_l = \mH_l (\log p(\rvx)) \cdot \frac{\nabla_{\ervx_l} \log p(\rvx)}{\mH_{l,l} (\log p(\rvx))}.
\end{equation} 
\end{theorem}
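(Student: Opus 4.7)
The plan is to recover the two unknown factors appearing in Proposition~\ref{prop:delta_score}, namely $\partial f_l/\partial \ervx_j$ and $\partial \log p^\epsilon(\ervx_l - f_l)/\partial x$, directly from entries of the score vector and Hessian matrix of $\log p(\rvx)$, and then to assemble them into the right-hand side of Equation~\ref{eq:estimate_delta}. Writing $g \coloneqq \log p^\epsilon$ throughout, the leaf property $Ch(\ervx_l) = \emptyset$ collapses Equation~\ref{eq:score_analytic} (with $j = l$) to
\begin{equation*}
\nabla_{\ervx_l} \log p(\rvx) = g'(\ervx_l - f_l),
\end{equation*}
since $f_l$ depends only on $\parents(\ervx_l)$ and therefore not on $\ervx_l$ itself.

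Next I would differentiate this identity once in $\ervx_l$ to obtain the diagonal Hessian entry
\begin{equation*}
\mH_{l,l}(\log p(\rvx)) = g''(\ervx_l - f_l).
\end{equation*}
For $j \neq l$, I would differentiate the same identity in $\ervx_j$. Because $\partial f_l/\partial \ervx_j$ is itself a function only of $\parents(\ervx_l)$ and so is constant in $\ervx_l$, the chain rule yields
\begin{equation*}
\mH_{j,l}(\log p(\rvx)) = -\frac{\partial f_l}{\partial \ervx_j}\, g''(\ervx_l - f_l),
\end{equation*}
which vanishes whenever $\ervx_j \notin \parents(\ervx_l)$, in agreement with the last clause of Proposition~\ref{prop:delta_score}. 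Taking the ratio eliminates the unknown $g''(\ervx_l - f_l)$ and gives $\mH_{j,l}/\mH_{l,l} = -\partial f_l/\partial \ervx_j$.

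Finally, multiplying this ratio by the scalar $\nabla_{\ervx_l}\log p(\rvx) = g'(\ervx_l - f_l)$ produces exactly $-(\partial f_l/\partial \ervx_j)\,g'(\ervx_l - f_l)$, which matches the expression for $\delta_j$ in Equation~\ref{eq:delta_l}. Stacking these coordinate identities for $j \in \{1,\dots,d\} \setminus \{l\}$ yields the vector equation~\ref{eq:estimate_delta}.

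The main obstacle is the bookkeeping: one must track precisely which variables each derivative depends on, so that the leaf property ($\ervx_l$ is not a parent of any node and $f_l$ has no $\ervx_l$-dependence) truly eliminates all of the $Ch$-mediated terms in Equation~\ref{eq:score_analytic} and leaves the clean cancellation above. I would also flag the implicit regularity assumption $\mH_{l,l}(\log p(\rvx)) \neq 0$, which is needed for the ratio to be well-defined and which is automatic under the noise conditions of Proposition~\ref{prop:indentifiable_score_general}, where $g'' = a$ is a non-zero constant.
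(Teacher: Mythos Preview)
Your proposal is correct and follows essentially the same route as the paper's own proof: both compute $\nabla_{\ervx_l}\log p(\rvx)=g'(\ervx_l-f_l)$, $\mH_{l,l}=g''(\ervx_l-f_l)$, and the off-diagonal entries $\mH_{j,l}$ via the chain rule, then take the ratio and multiply to recover $\delta_j$. Your sign bookkeeping is in fact tidier than the paper's (which silently drops the minus from $\partial(\ervx_l-f_l)/\partial\ervx_j$), and your explicit remark that $\mH_{l,l}\neq 0$ is required is a useful observation the paper leaves implicit.
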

See proof in Appendix \ref{app:proof_estimate_deciduous}.

\begin{figure}[t]
\begin{center}
\includegraphics[]{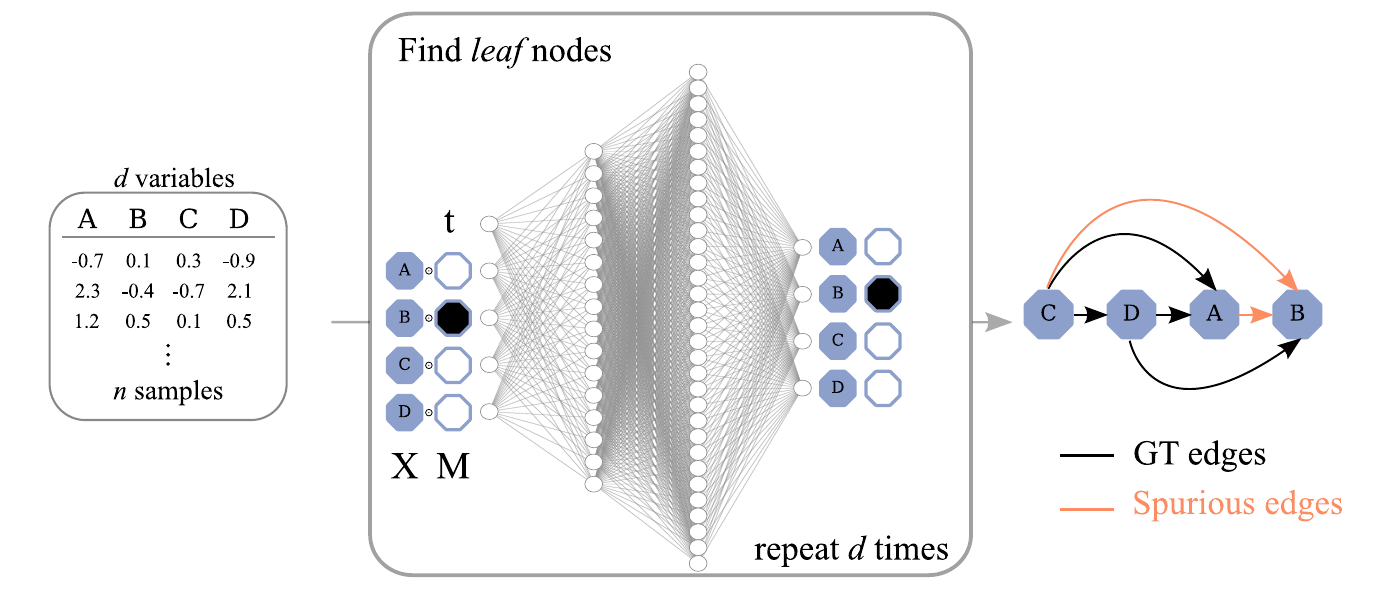}
\end{center}
\caption{Topological ordering with diffusion models by iteratively finding leaf nodes. At each iteration, one leaf node is found using Equation \ref{eq:find_leaf}. In the subsequent iteration, the previous leaves are removed, reducing the search space. After topological ordering (as illustrated on the right side), the presence of edges (causal mechanisms) between variables can be inferred such that parents of each variable are selected from the preceding variables in the ordered list. Spurious edges can be pruned with feature selection as a post-processing step \citep{Buhlmann2014CAM:REGRESSION,Lachapelle2020Gradient-BasedLearning,Rolland2022ScoreModels}.}
\label{fig:overview_topological_ordering}
\end{figure}

\section{Causal Discovery with Diffusion Models}

DPMs approximate the score of the data distribution \citep{Song2019GenerativeDistribution}. In this section, we explore how to use DPMs to perform leaf discovery and compute the deciduous score, based on Theorem \ref{th:estimate_deciduous}, for iteratively finding and removing leaf nodes \emph{without} re-training the score.

\subsection{Approximating the Score's Jacobian via Diffusion Training}

The score's Jacobian can be approximated by learning the score $\diffusionmodel$ with denoising diffusion training of neural networks and back-propagating \citep{Rumelhart1986LearningErrors}\footnote{The Jacobian of a neural network can be efficiently computed with auto-differentiation libraries such as functorch \citep{HoraceHe2021Functorch:PyTorch}.} from the output to the input variables. It can be written, for an input data point $\vx \in \R^{d}$, as
\begin{equation}
\label{eq:diffusion_hessian}
\mH_{i,j} \log p(\vx) \approx \nabla_{i,j} \diffusionmodel(\vx, t),
\end{equation}
where $\nabla_{i,j} \diffusionmodel(\vx, t)$ means the $ith$ output of $\diffusionmodel$ is backpropagated to the $jth$ input. The diagonal of the Hessian in Equation \ref{eq:diffusion_hessian} can, then, be used for finding leaf nodes as in Equation \ref{eq:var_hessian}. 

In a two variable setting, it is sufficient for causal discovery to
\begin{enumerate*}[label=(\roman*)]
    \item train a diffusion model (Equation \ref{eq:trainingDDPM});
    \item approximate the score's Jacobian via backpropagation (Equation \ref{eq:diffusion_hessian});
    \item compute variance of the diagonal across all data points;
    \item identify the variable with lowest variance as effect (Equation \ref{eq:var_hessian}). 
\end{enumerate*}
We illustrate in Appendix \ref{app:visualisation_twovar} the Hessian of a two variable SCM computed with a diffusion model.

\subsection{Topological Ordering}
\label{sec:topological_ordering}
When a DAG contains more than two nodes, the process of finding leaf nodes (i.e.~the topological order) needs to be done iteratively as illustrated in Figure \ref{fig:overview_topological_ordering}. The naive (greedy) approach would be to remove the leaf node from the dataset, recompute the score, and compute the variance of the new distribution's Hessian to identify the next leaf node \citep{Rolland2022ScoreModels}. Since we employ diffusion models to estimate the score, this equates to re-training the model each time after a leaf is removed. 

We hence propose a method to compute the deciduous score $\nabla \log p(\rvx_{-l})$ using Theorem \ref{th:estimate_deciduous} to remove leaves from the initial score \textit{without} re-training the neural network. In particular, assuming that a leaf $\ervx_l$ is found, the residue $\Delta_l$ can be approximated\footnote{The diffusion model itself is an approximation of the score, therefore its gradients are approximations of the score derivatives.} with
\begin{equation}
\label{eq:approx_delta}
    \Delta_l(\vx, t) \approx \nabla_{l} \diffusionmodel(\vx, t) \cdot \frac{\diffusionmodel(\vx, t)_l}{\nabla_{l,l} \diffusionmodel(\vx, t)}
\end{equation}
where $\diffusionmodel(\vx, t)_l$ is output corresponding to the leaf node. Note that the term $\nabla_{l} \diffusionmodel(\vx, t)$ is a vector of size $d$ and the other term is a scalar. During topological ordering, we compute $\Delta_{\pi}$, which is the summation of $\Delta_l$ over all leaves already discovered and appended to $\pi$. Naturally, we only compute $\Delta_l$ w.r.t. nodes $\ervx_j \notin \pi$ because $\ervx_j \in \pi$ have already been ordered and are not taken into account anymore.

In practice, we observe that training $\diffusionmodel$ on $\mX$ but using a subsample $\mB \in \R^{k\times d}$ of size $k$ randomly sampled from $\mX$ increases speed without compromising performance (see Section \ref{sec:complexity}). { In addition, analysing Equation \ref{eq:delta_l}, the absolute value of the residue $\delta_l$ decreases if the values of $\ervx_l$ are set to zero once the leaf node is discovered.} Therefore, we apply a mask $\mM_{\pi} \in \{0,1\}^{k\times d}$ over leaves discovered in the previous iterations and compute only the Jacobian of the outputs corresponding to $\rvx_{-l}$. $\mM_{\pi}$ is updated after each iteration based on the ordered nodes $\pi$. We then find a leaf node according to
\begin{equation}
\label{eq:find_leaf}
\text{leaf} = \argmin_{\rx_i \in \rvx} \Var_\mB \left[\nabla_\rvx \left( score(\mM_{\pi} \odot \mB, t) \right) \right],
\end{equation}
where $\diffusionmodel$ is a DPM trained with Equation \ref{eq:trainingDDPM}. See Appendix \ref{sec:optimalt} for the choice of $t$. This topological ordering procedure is formally described in Algorithm \ref{alg:DiffAN}, $score(-\pi)$ means that we only consider the outputs for nodes $\ervx_j \notin \pi$

\begin{algorithm}[h]
\caption{Topological Ordering with DiffAN} \label{alg:DiffAN}
\SetAlgoLined
\DontPrintSemicolon
\KwIn{$\mX \in \R^{n\times d}$, trained diffusion model $\diffusionmodel$, ordering batch size $k$}
$\pi = []$, $\Delta_{\pi} = \mathbf{0}^{k \times d}$, $\mM_{\pi} = \mathbf{1}^{k \times d}$, $score = \diffusionmodel$ \;
\While{$\|\pi\| \neq d$}{
$\mB \overset{k}{\leftarrow} \mX$ \tcp*{Randomly sample a batch of $k$ elements}
$\mB \leftarrow \mB \circ \mM_{\pi}$ \tcp*{Mask removed leaves}
$\Delta_{\pi} =  \text{Get}\Delta_{\pi}(score,\mB)$ \tcp*{Sum of Equation \ref{eq:approx_delta} over $\pi$}
$score = score(-\pi) + \Delta_{\pi}$ \tcp*{Update score with residue}
$leaf =  \text{GetLeaf}(score,\mB)$ \tcp*{Equation \ref{eq:find_leaf}}
$\pi = [leaf,\pi]$ \tcp*{Append leaf to ordered list}
$\mM_{:,leaf} = \mathbf{0}$ \tcp*{Set discovered leaf to zero}
}
\KwOut{Topological order $\pi$}
\end{algorithm}

\subsection{Computational Complexity and Practical Considerations}
\label{sec:complexity}
We now study the complexity of topological ordering with DiffAN w.r.t.~the number of samples $n$ and number of variables $d$ in a dataset. In addition, we discuss what are the complexities of a greedy version as well as approximation which only utilises masking.

\textbf{Complexity on $n$.}~Our method separates learning the score $\diffusionmodel$ from computing the variance of the Hessian's diagonal across data points, in contrast to \citet{Rolland2022ScoreModels}. We use all $n$ samples in $\mX$ for learning the score function with diffusion training (Equation \ref{eq:trainingDDPM}). It does \textit{not} involve expensive constrained optimisation techniques\footnote{Such as the Augmented Lagrangian method \citep{Zheng2018DAGsLearning,Lachapelle2020Gradient-BasedLearning}.} and we train the model for a fixed number of epochs (which is linear with $n$) or until reaching the early stopping criteria. We use a MLP that grows in width with $d$ but it does not significantly affect complexity. Therefore, we consider training to be $O(n)$. Moreover, Algorithm \ref{alg:DiffAN} is computed over a batch $\mB$ with size $k < n$ instead of the entire dataset $\mX$, as described in Equation \ref{eq:find_leaf}. Note that the number of samples $k$ in $\mB$ can be arbitrarily small and constant for different datasets. In Section \ref{sec:scaling}, we verify that the accuracy of causal discovery initially improves as $k$ is increased but eventually tapers off. 

\textbf{Complexity on $d$.}~Once $\diffusionmodel$ is trained, a topological ordering can be obtained by running $\nabla_{\vx} \diffusionmodel(\vx, t)$ $d$ times. Moreover, computing the Jacobian of the score requires back-propagating the gradients $d-i$ times, where $i$ is the number of nodes already ordered in a given iteration. Finally, computing the deciduous score's residue (Equation \ref{eq:approx_delta}) means computing gradient of the $i$ nodes. Resulting in a complexity of $O(d \cdot (d-i) \cdot i)$ with $i$ varying from $0$ to $d$ which can be described by $O(d^3)$. The final topological ordering complexity is therefore $O(n + d^3)$.

\textbf{DiffAN Masking.}~We verify empirically that the masking procedure described in Section \ref{sec:topological_ordering} can significantly reduce the deciduous score's residue absolute value while maintaining causal discovery capabilities. In DiffAN Masking, we do not re-train the $\diffusionmodel$ nor compute the deciduous score. This ordering algorithm is an approximation but has shown to work well in practice while showing remarkable scalability. DiffAN Masking has $O(n + d^2)$ ordering complexity.

\section{Experiments}

In our experiments, we train a NN with a DPM objective to perform topological ordering and follow this with a pruning post-processing step \citep{Buhlmann2014CAM:REGRESSION}. The performance is evaluated on synthetic and real data and compared to state-of-the-art causal discovery methods from observational data which are either ordering-based or gradient-based methods, \textbf{NN architecture.}~We use a 4-layer multilayer perceptron (MLP) with LeakyReLU and layer normalisation. \textbf{Metrics.}~We use the structural Hamming distance (SHD), Structural Intervention Distance (SID) \citep{Peters2015StructuralGraphs}, \textit{Order Divergence} \citep{Rolland2022ScoreModels} and run time in seconds. See Appendix \ref{app:metrics} for details of each metric. \textbf{Baselines.}~We use CAM \citep{Buhlmann2014CAM:REGRESSION}, GranDAG \citep{Lachapelle2020Gradient-BasedLearning} and SCORE \citep{Rolland2022ScoreModels}. We apply the pruning procedure of \citet{Buhlmann2014CAM:REGRESSION} to all methods. See detailed results in the Appendix \ref{app:details_exp}.{   Experiments with real data from Sachs \citep{Sachs2005CausalData} and SynTReN \citep{VandenBulcke2006SynTReN:Algorithms} datasets are in the Appendix \ref{sec:exp_real_data}.}

\subsection{Synthetic Data}
\label{sec:exp_synthetic}
In this experiment, we consider causal relationships with $f_i$ being a function sampled from a Gaussian Process (GP) with radial basis function kernel of bandwidth one. We generate data from additive noise models which follow a Gaussian, Exponential or Laplace distributions with noise scales in the intervals $\{[0.4,0.8],[0.8,1.2],[1,1]\}$, which are known to be identifiable \citep{Peters2014CausalModels}. The causal graph is generated using the Erd\"os-R\'enyi (ER) \citep{Erdos1960OnGraphs}  and Scale Free (SF) \citep{Bollobas2003DirectedGraphs} models. For a fixed number of nodes $d$, we vary the sparsity of the sampled graph by setting the average number of edges to be either $d$ or $5d$. We use the notation {[$d$]}{[graph type]}{[sparsity]} for indicating experiments over different synthetic datasets. We show that DiffAN performs on par with baselines while being extremely fast, see Figure \ref{fig:synthetic_results_metricshd}. We also explore the role of overfitting in Appendix \ref{sec:overfitting}, the difference between DiffAN with masking only and the greedy version in Appendix \ref{sec:version_diff}, how we choose $t$ during ordering in Appendix \ref{sec:optimalt} and we give results stratified by experiment in Appendix \ref{sec:detailed_results}.

\begin{figure}[t!]
     \centering
     \begin{subfigure}[b]{0.48\textwidth}
         \centering
         \includegraphics[width=\textwidth]{ 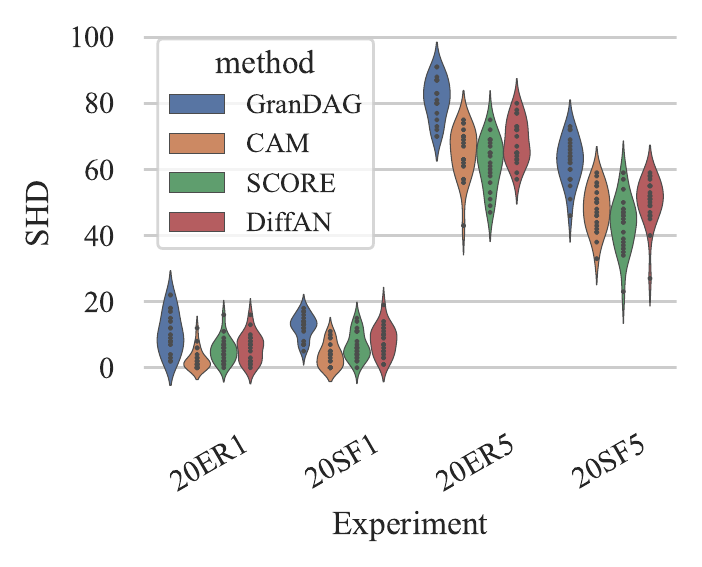}
         \label{fig:synthetic_results_nnodes20_metricshd}
     \end{subfigure}
     \hfill
     \begin{subfigure}[b]{0.48\textwidth}
         \centering
         \includegraphics[width=\textwidth]{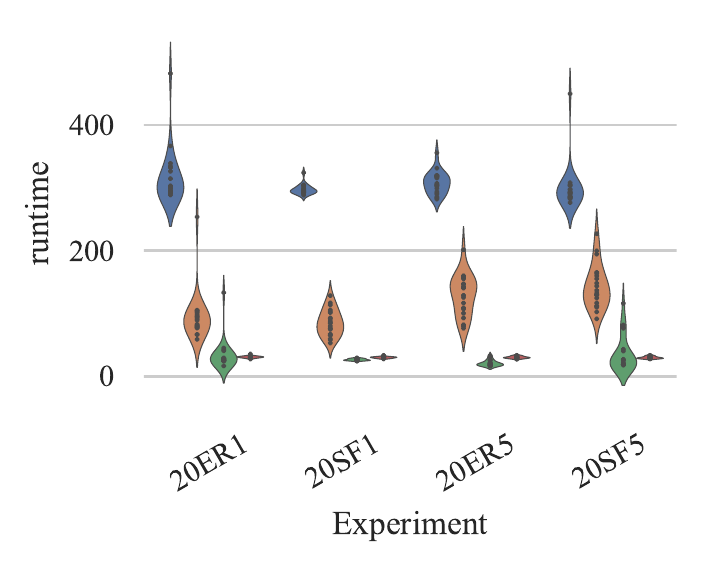}
         \label{fig:synthetic_results_nnodes50_metricshd}
     \end{subfigure}
     \vspace{-0.8cm}
        \caption{SHD (left) and run time in seconds (right) for experiments of synthetic data graphs for graphs with 20 nodes. The variation in the violinplots come from 3 different seeds over dataset generated from 3 different noise type and 3 different noise scales. Therefore, we run a total of 27 experiments for each method and synthetic datasets type}
        \label{fig:synthetic_results_metricshd}
\end{figure}

\subsection{Scaling up with DiffAN Masking}
\label{sec:scaling}

We now verify how DiffAN scales to bigger datasets, in terms of the number of samples $n$. Here, we use \textit{DiffAN masking} because computing the residue with DiffAN would be too expensive for very big $d$. We evaluate only the topological ordering, ignoring the final pruning step. Therefore, the performance will be measured solely with the Order Divergence metric.

{ 
\textbf{Scaling to large datasets.}~We evaluate how DiffAN compares to SCORE \citep{Rolland2022ScoreModels}, the previous state-of-the-art, in terms of run time (in seconds) and and the performance (order divergence) over datasets with $d = 500$ and different sample sizes $n \in {10^2, \dots, 10^5}$, the error bars are results across 6 dataset (different samples of ER and SF graphs). As illustrated in Figure \ref{fig:RuntimeVsSample}, DiffAN is the more tractable option as the size of the dataset increases. SCORE relies on inverting a very large $n \times n$ matrix which is expensive in memory and computing for large $n$. Running SCORE for $d = 500$ and $n > 2000$ is intractable in a machine with 64Gb of RAM. Figure \ref{fig:scaling} (left) shows that, since DiffAN can learn from bigger datasets and therefore achieve better results as sample size increases.}

\textbf{Ordering batch size.}~An important aspect of our method, discussed in Section \ref{sec:complexity}, that allows scalability in terms of $n$ is the separation between learning the score function $\diffusionmodel$ and computing the Hessian variance across a batch of size $k$, with $k < n$. Therefore, we show empirically, as illustrated in Figure \ref{fig:scaling} (right), that decreasing $k$ does not strongly impact performance for datasets with $d \in {10,20,50}$.

\begin{figure}[h]
     \centering
     \begin{subfigure}[b]{0.48\textwidth}
         \centering
         \includegraphics[width=\textwidth]{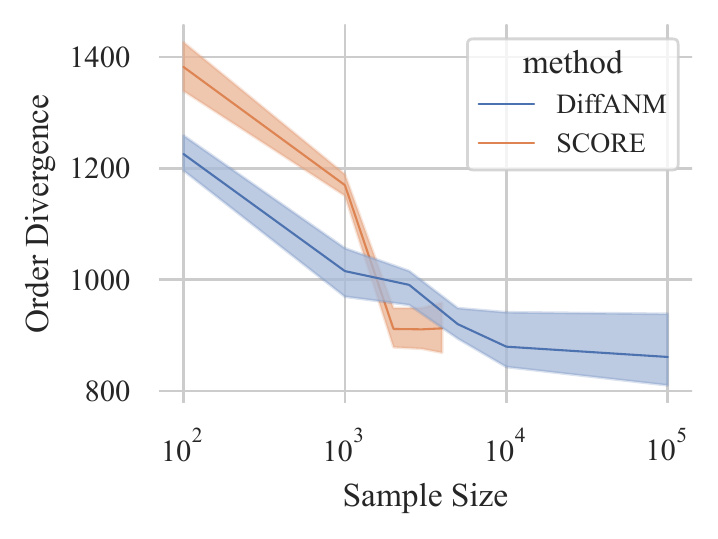}
     \end{subfigure}
     \hfill
     \begin{subfigure}[b]{0.48\textwidth}
         \centering
         \includegraphics[width=\textwidth]{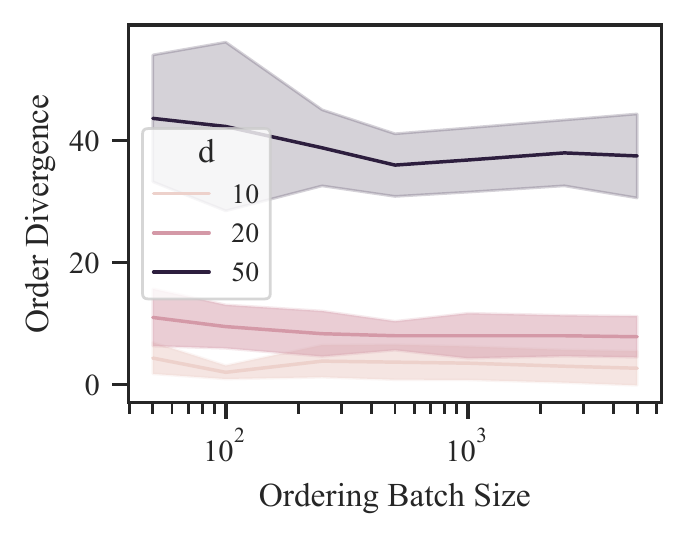}
     \end{subfigure}
     \vspace{-0.2cm}
        \caption{Accuracy of DiffAN as the dataset size is scaled up for datasets with $500$ variables and increasing numbers of data samples $n$ (left) and as the batch size for computing the Hessian variance is changed (right). {  We show $95\%$ confidence intervals over 6 datasets which have different graph structures sampled different graph types (ER/SF).}}
        \label{fig:scaling}
\end{figure}

\section{Related Works}



\textbf{Ordering-based Causal Discovery.}~The observation that a causal DAG can be partially represented with a topological ordering goes back to \citet{Verma1990CausalExpressiveness}. Searching the topological ordering space instead of searching over the space of DAGs has been done with greedy Markov Chain Monte Carlo (MCMC) \citep{Friedman2003BeingNetworks}, greedy hill-climbing search \citep{Teyssier2005Ordering-BasedNetworks}, arc search \citep{Park2017BayesianOrder}, restricted maximum likelihood estimators \citep{Buhlmann2014CAM:REGRESSION}, sparsest permutation \citep{Raskutti2018LearningPermutations,Lam2022GreedyAlgorithm,Solus2021ConsistencyAlgorithms}, and reinforcement learning \citep{Wang2021Ordering-BasedLearning}. In linear additive models, \citet{Ghoshal2018LearningComplexity,Chen2019OnAssumption} propose an approach, under some assumptions on the noise variances, to discover the causal graph by sequentially identifying leaves based on an estimation of the precision matrix. 

{ 
\textbf{Hessian of the Log-likelihood.}~Estimating $\mH (\log p(\rvx))$ is the most expensive task of the ordering algorithm. Our baseline \citep{Rolland2022ScoreModels} propose an extension of \citet{Li2018GradientModels} which utilises the Stein's identity over a RBF kernel \citep{Scholkopf2002LearningKernels}. \citeauthor{Rolland2022ScoreModels}'s method cannot obtain gradient estimates at positions out of the training samples. Therefore, evaluating the Hessian over a subsample of the training dataset is not possible. Other promising kernel-based approaches rely on spectral decomposition \citep{Shi2018ADistributions} solve this issue and can be promising future directions. Most importantly, computing the kernel matrix is expensive for memory and computation on $n$. There are, however, methods \citep{Achlioptas2001SamplingMethods,Halko2011FindingDecompositions,Si2017MemoryApproximation} that help scaling kernel techniques, which were not considered in the present work. Other approaches are also possible with deep likelihood methods such as normalizing flows \citep{Durkan2019NeuralFlows,Dinh2017DensityNVP} and further compute the Hessian via backpropagation. This would require two backpropagation passes giving $O(d^2)$ complexity and be less scalable than denoising diffusion. Indeed, preliminary experiments proved impractical in our high-dimensional settings. 

We use DPMs because they can efficiently approximate the Hessian with a single backpropagation pass and while allowing Hessian evaluation on a subsample of the training dataset. It has been shown \citep{Song2019GenerativeDistribution} that denoising diffusion can better capture the score than simple denoising \citep{Vincent2011AAutoencoders} because noise at multiple scales explore regions of low data density.

}


\section{Conclusion}

We have presented {  a scalable} method using DPMs for causal discovery. Since DPMs approximate the score of the data distribution, they can be used to efficiently compute the log-likelihood's Hessian by backpropagating each element of the output with respect to each element of the input. 
The \textit{deciduous score} allows adjusting the score to remove the contribution of the leaf most recently removed, avoiding re-training the NN. Our empirical results show that neural networks can be efficiently used for topological ordering in high-dimensional graphs (up to $500$ nodes) and with large datasets (up to $10^5$ samples).

{  Our \textit{deciduous score} can be used with other Hessian estimation techniques as long as obtaining the score and its full Jacobian is possible from a trained model, \textit{e.g.}~sliced score matching \citep{Song2020SlicedEstimation} and approximate backpropagation \citep{Kingma2010RegularizedMatching}. Updating the score is more practical than re-training in most settings with neural networks. Therefore, our theoretical result enables the community to efficiently apply new score estimation methods to topological ordering. 
Moreover, DPMs have been previously used generative diffusion models in the context of causal estimation \citep{Sanchez2022DiffusionEstimation}. In this work, we have not explored the generative aspect such as \citet{Geffner2022DeepInference} does with normalising flows. Finally, another promising direction involves constraining the NN architecture as in \citet{Lachapelle2020Gradient-BasedLearning} with constrained optimisation losses \citep{Zheng2018DAGsLearning}.}

\section{Acknowledgement}
This work was supported by the University of Edinburgh, the Royal Academy of Engineering and Canon Medical Research Europe via P. Sanchez's PhD studentship. S.A. Tsaftaris acknowledges the support of Canon Medical and the Royal Academy of Engineering and the Research Chairs and Senior Research Fellowships scheme (grant RCSRF1819\textbackslash825).


\bibliographystyle{iclr2023_conference}
\bibliography{references}

\clearpage

\appendix



\section{Proofs}

We re-write Equation \ref{eq:score_analytic} here for improved readability:
\begin{equation}
    \nabla_{\ervx_j} \log p(\rvx) = \frac{\partial \log p^{\epsilon} \left(\ervx_j - f_j \right)}{\partial \ervx_j}  - \sum_{i \in Ch(\ervx_j)} \frac{\partial f_i}{\partial \ervx_j} \frac{\partial \log p^{\epsilon} \left(\ervx_i - f_i \right)}{\partial x}.
\end{equation}
\subsection{Proof Lemma \ref{prop:indentifiable_score_general}}
\label{app:proof_indentifiable_score_general}
\begin{proof}
We start by showing the ``$\Leftarrow$'' direction by deriving Equation \ref{eq:score_analytic} w.r.t.~$\ervx_j$. If $\ervx_j$ is a leaf, only the first term of the equation is present, then taking its derivative results in 
\begin{equation}
\label{eq:jacobian_score_leaf}
\begin{split}
\mH_{l,l} (\log p(\rvx)) 
& = \frac{\partial^2 \log p^{\epsilon} \left(\ervx_l - f_l \right)}{\partial x^2} \cdot \frac{d f_l }{d \ervx_l} \\
& = \frac{\partial^2 \log p^{\epsilon} \left(\ervx_l - f_l \right)}{\partial x^2}.
\end{split}
\end{equation}
Therefore, only if $j$ is a leaf and $\frac{d \log p^\epsilon}{d x^2} = a$, $\Var_\mX \left[\mH_{l,l} (\log p(\rvx))\right] = 0$. The remaining of the proof follows \citet{Rolland2022ScoreModels} (which was done for a Gaussian noise only), we prove by contradiction that $\Rightarrow$ is also true. In particular, if we consider that $\ervx_j$ is \textbf{not} a leaf and $\mH_{j,j} \log p(\rvx) = c$, with $c$ being a constant, we can write
\begin{equation}
\label{eq:score_c_g}
    \nabla_{\ervx_j} \log p(\rvx) = c \ervx_j + g(\rvx_{-j}).
\end{equation}
Replacing Equation \ref{eq:score_c_g} in to Equation \ref{eq:score_analytic}, we have
\begin{equation}
\label{eq:score_analytic_inter}
    c \ervx_j + g(\rvx_{-j}) = \frac{\partial \log p^{\epsilon} \left(\ervx_j - f_j \right)}{\partial x}  - \sum_{\ervx_i \in Ch(\ervx_j)} \frac{\partial f_i}{\partial \ervx_j} \frac{\partial \log p^{\epsilon} \left(\ervx_i - f_i \right)}{\partial x}.
\end{equation}
Let $\ervx_c \in Ch(\ervx_j)$ such that $\ervx_c \not \in \parents(Ch(\ervx_j))$. $\ervx_c$ always exist since $\ervx_j$ is not a leaf, and it
suffices to pick a child of $\ervx_c$ appearing at last position in some topological order.
If we isolate the terms depending on $\ervx_c$ on the RHS of Equation \ref{eq:score_analytic_inter}, we have
\begin{equation}
\label{eq:derivate_noise_proof_rhs}
\begin{split}
    c \ervx_j + \frac{\partial \log p^{\epsilon} \left(\ervx_j - f_j \right)}{\partial x}  - \sum_{\ervx_i \in Ch(\ervx_j), \ervx_i \neq \ervx_c} \frac{\partial f_i}{\partial \ervx_j} \frac{\partial \log p^{\epsilon} \left(\ervx_i - f_i \right)}{\partial x} = \frac{\partial f_c}{\partial \ervx_j} \frac{\partial \log p^{\epsilon} \left(\ervx_c - f_c \right)}{\partial x} - g(\rvx_{-j}).
\end{split}
\end{equation}
Deriving both sides w.r.t.~$\ervx_c$, since the LHS of Equation \ref{eq:derivate_noise_proof_rhs} does not depend on $\ervx_c$, we can write
\begin{equation}
\begin{split}
    & \frac{\partial}{\partial \ervx_c} \left( \frac{\partial f_c}{\partial \ervx_j} \frac{\partial \log p^{\epsilon} \left(\ervx_c - f_c \right)}{\partial x} - g(\rvx_{-j}) \right) = 0 \\
    & \Rightarrow \frac{\partial f_c}{\partial \ervx_j} \cancelto{a}{ \frac{\partial \log p^{\epsilon} \left(\ervx_c - f_c \right)}{\partial x^2} }= \frac{\partial g(\rvx_{-j})}{\partial \ervx_c}
\end{split}
\end{equation}
Since $g$ does not depend on $\ervx_j$, $\frac{\partial f_c}{\partial \ervx_j}$ does not depend on $\ervx_j$ neither, implying that $f_c$ is linear in $\ervx_j$, contradicting the non-linearity assumption.
\end{proof}

\subsection{Proof Theorem \ref{th:estimate_deciduous}}
\label{app:proof_estimate_deciduous}
\begin{proof}
Using Equation \ref{eq:score_analytic}, we will derive expressions for each of the elements in Equation \ref{eq:estimate_delta} and show that it is equivalent to $\delta_l$ in Equation \ref{eq:delta_l}.
First, note that the score of a leaf node $\ervx_l$ can be denoted as:
\begin{equation}
\label{eq:scores_leaves}
\begin{split}
\nabla_{\ervx_l} \log p(\rvx) & = \frac{\partial \log p^{\epsilon} \left(\ervx_j - f_j \right)}{\partial \ervx_j} \\
& = \frac{\partial \log p^{\epsilon} \left(\ervx_l - f_l \right)}{\partial x} \cdot \frac{d \left(\ervx_l - f_l \right)}{d \ervx_l} \\
& = \frac{\partial \log p^{\epsilon} \left(\ervx_l - f_l \right)}{\partial x}
\end{split}
\end{equation}

Second, replacing Equation \ref{eq:scores_leaves} into each element of $\mH_l (\log p(\rvx)) \in R^d$, we can write
\begin{equation}
\label{eq:jacobian_score}
\begin{split}
\mH_{l,j} (\log p(\rvx)) & = \frac{\partial}{\partial \ervx_j} \left[ \nabla_{\ervx_l} \log p(\rvx) \right] \\
& = \frac{\partial^2 \log p^{\epsilon} \left(\ervx_l - f_l \right)}{\partial x^2} \cdot \frac{d\left(\ervx_l - f_l \right)}{d \ervx_j}\\
& = \frac{\partial^2 \log p^{\epsilon} \left(\ervx_l - f_l \right)}{\partial x^2} \cdot \frac{d f_l }{d \ervx_j} .
\end{split}
\end{equation}
If $j = l$ in Equation \ref{eq:jacobian_score}, we have
\begin{equation}
\begin{split}
\mH_{l,l} (\log p(\rvx)) & = \frac{\partial^2 \log p^{\epsilon} \left(\ervx_l - f_l \right)}{\partial x^2} \cdot \frac{d f_l }{d \ervx_l} \\
& = \frac{\partial^2 \log p^{\epsilon} \left(\ervx_l - f_l \right)}{\partial x^2}.
\end{split}
\end{equation}
Finally, replacing Equations \ref{eq:scores_leaves}, \ref{eq:jacobian_score} and \ref{eq:jacobian_score_leaf} into the Equation \ref{eq:delta_l} for a single node $\ervx_j$, if $j \neq l$, we have
\begin{equation}
\label{eq:residue_final_formulation}
\begin{split}
\delta_j & = \frac{\mH_{l,j} (\log p(\rvx)) \cdot \nabla_{\ervx_l} \log p(\rvx)}{\mH_{l,l} (\log p(\rvx))} \\
& = \frac{{\color{magenta}\frac{\partial \log p^{\epsilon} \left(\ervx_l - f_l \right)}{\partial x^2}} \cdot \frac{d f_l }{d \ervx_j} \cdot \frac{\partial \log p^{\epsilon} \left(\ervx_l - f_l \right)}{\partial x} }{ {\color{magenta}\frac{\partial \log p^{\epsilon} \left(\ervx_l - f_l \right)}{\partial x^2}}} \\
& =  \frac{d f_l }{d \ervx_j} \cdot \frac{\partial \log p^{\epsilon} \left(\ervx_l - f_l \right)}{\partial x}.
\end{split}
\end{equation}
The last line in Equation \ref{eq:residue_final_formulation} is the same as in Equation \ref{eq:delta_l} from Lemma \ref{prop:delta_score}, proving that $\delta_j$ can be written using the first and second order derivative of the log-likelihood.
\end{proof}

\section{Score of Nonlinear ANM with Gaussian noise}
\label{app:gaussian_score}
A SCM entails a distribution
\begin{equation}
\label{eq:jointp_scm}
p(\rvx) = \prod_{i=1}^d p(\ervx_i \mid \parents(\ervx_i)),    
\end{equation}
over the variables $\rvx$ \citep{Peters2017ElementsInference}
By assuming that the noise variables $\epsilon_i \sim \mathcal{N}(0, \sigma_i^2)$ and inserting the ANM function, Equation \ref{eq:jointp_scm} can be written as
\begin{equation}
   \log p(\rvx) = -\frac{1}{2}\sum_{i=1}^d \left(\frac{\ervx_i - f_i(\parents(\ervx_i))}{\sigma_i} \right)^2 - \frac{1}{2}\sum_{i=1}^d \log(2 \pi \sigma_i^2).
\end{equation}
The score of $p(\rvx)$ can hence be written as 
\begin{equation}
\label{eq:gaussian_score_analytic}
    \nabla_{\ervx_j} \log p(\rvx) = -\frac{\ervx_j - f_j(\parents(\ervx_j))}{\sigma_j^2} + \sum_{i \in \text{children}(j)} \frac{\partial f_i}{\partial \ervx_j}(\parents(\ervx_i)) \frac{\ervx_i - f_i(\parents(\ervx_i))}{\sigma_i^2}.
\end{equation}

\section{Visualisation of the Score's Jacobian for Two Variables}
\label{app:visualisation_twovar}
Considering a two variables problem where the causal mechanisms are $B = f_\omega(A) + \epsilon_B$ and $A = \epsilon_A$ with $\epsilon_A,\epsilon_B \sim \mathcal{N}(0,1)$ and $f_\omega$ being a two-layer MLP with randomly initialised weights. Note that, in Figure \ref{fig:diffusion_hessian}, the variance of $\frac{\partial^2 \log p(\mA,\mB)} {\partial \mB^2 }$, while not $0$ as predicted by Equation \ref{eq:var_hessian}, is smaller than $\frac{\partial^2 \log p(\mA,\mB)} {\partial \mA^2 }$ allowing discovery of the true causal direction.

\begin{figure}[ht]
\centering
\includegraphics[height=1.5in]{ 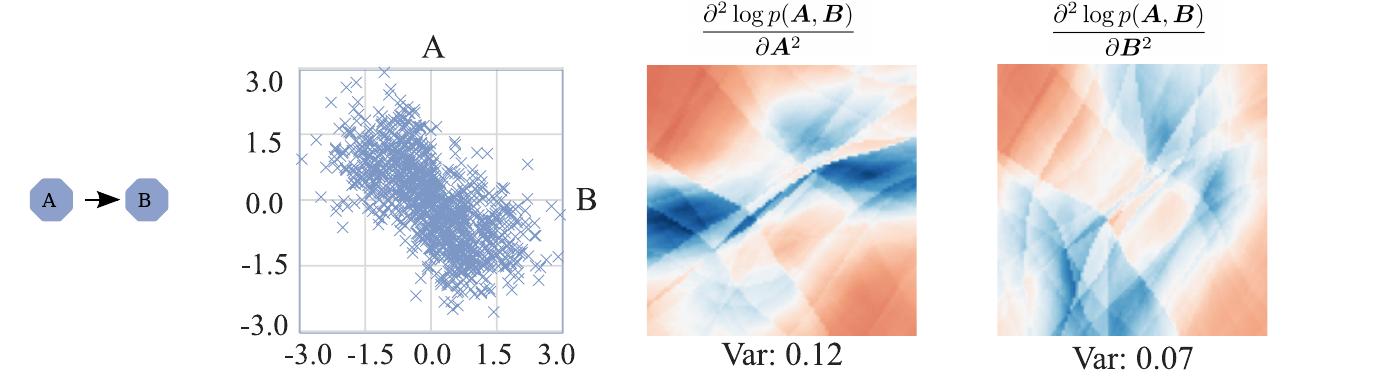}
\caption{Visualisation of the diagonal of the score's Jacobian estimated with a diffusion model as in Equation \ref{eq:diffusion_hessian} for a two-variable SCM where $A \rightarrow B$. }
\label{fig:diffusion_hessian}
\end{figure}

\section{Experiments Details}
\label{app:details_exp}
\subsection{Hyperparameters of DPM training}

We now describe the hyperparameters for the diffusion training. We use number of time steps $T = 100$, $\beta_t$ is a linearly scheduled between $\beta_{\text{min}} = 0.0001$ and $\beta_{\text{max}} = 0.02$. The model is trained according to Equation \ref{eq:trainingDDPM} which follows \citet{Ho2020DenoisingModels}. During sampling, $t$ is sampled from a Uniform distribution.
        
\subsection{Neural Architecture}

The neural network follows a simple MLP with 5 Linear layers, LeakyReLU activation function, Layer Normalization and Dropout in the first layer. The full architecture is detailed in Table \ref{tab:mlp_architecture}.

\begin{table}[]

\centering
\caption{MLP architecture. The hyperparameters of each Linear layer depend on $d$ such that big $= max(1024, 5*d)$ and 
        small $= max(128, 3*d)$. }
\label{tab:mlp_architecture}
\label{mlp_parameters}
\begin{tabular}{|l|l|}
\textbf{Layer} & \textbf{Hyperparameters} \\ \hline
Linear         & Ch: ($d+1$, small)       \\
LeakyReLU      &                          \\
LayerNorm      &                          \\
Dropout        & Prob: 0.2                \\
Linear         & Ch: (small, big)         \\
LeakyReLU      &                          \\
LayerNorm      &                          \\
Linear         & Ch: (big, big)           \\
LeakyReLU      &                          \\
Linear         & Ch: (big, big)           \\
LeakyReLU      &                          \\
Linear         & Ch: (big, $d$)          
\end{tabular}
\end{table}

\subsection{Metrics}
\label{app:metrics}
For each method, we compute the 

\textbf{SHD.}~Structural Hamming distance between the output and the true causal graph, which counts the number of missing, falsely detected, or reversed edges.

\textbf{SID.}~Structural Intervention Distance is based on a graphical criterion only and quantifies the closeness between two DAGs in terms of their corresponding causal inference statements\citep{Peters2015StructuralGraphs}. 

\textbf{Order Divergence.}~\citet{Rolland2022ScoreModels} propose this quantity for measuring how well the topological order is estimated. For an ordering $\pi$, and a target adjacency matrix $A$, we define the topological order divergence $D_{top}(\pi, A)$ as
\begin{equation}
D_{top}(\pi, \mA) = \sum_{i=1}^d \sum_{j:\pi_i > \pi_j} \mA_{ij}.    
\end{equation}
If $\pi$ is a correct topological order for $\mA$, then $D_{top}(\pi, \mA) = 0$. Otherwise, $D_{top}(\pi, \mA)$ counts the number of edges that cannot be recovered due to the choice of topological order. Therefore, it provides a lower bound on the SHD of the final algorithm (irrespective of the pruning method).

\section{Other Results}

\subsection{Real Data}
\label{sec:exp_real_data}

We consider two real datasets: 
\begin{enumerate*}[label=(\roman*)]
    \item Sachs: A protein signaling network based on expression levels of proteins and phospholipids \citep{Sachs2005CausalData}. We consider only the observational data ($n = 853$ samples) since our method targets discovery of causal mechanisms when only observational data is available. The ground truth causal graph given by \citet{Sachs2005CausalData} has 11 nodes and 17 edges.
    \item SynTReN: We also evaluate the models on a pseudo-real dataset sampled from SynTReN generator \citep{VandenBulcke2006SynTReN:Algorithms}.
\end{enumerate*}
Results, in Table \ref{tab:real_data}, show that our method is competitive against other state-of-the-art causal discovery baselines on real datasets.

\begin{table}[ht]
\centering
\caption{SHD and SID results over real datasets.}
\label{tab:real_data}
\begin{tabular}{lcccc}
\multirow{2}{*}{} & \multicolumn{2}{c}{Sachs} & \multicolumn{2}{c}{SynTReN} \\ \cline{2-5} 
                  & SHD         & SID         & SHD          & SID          \\ \hline
CAM               & 12          & 55          & 40.5         & 152.3        \\ \hline
GraN-DAG          & 13          & 47          & 34.0         & 161.7        \\ \hline
SCORE             & 12          & 45          & 36.2         & 193.4        \\ \hline
DiffAN (ours)    & 13          & 56          & 39.7         & 173.5         \\ \hline
\end{tabular}
\end{table}

\subsection{Overfitting}
\label{sec:overfitting}
The data used for topological ordering (inference) is a subset of the training data. Therefore, it is not obvious if overfitting would be an issue with our algorithm. Therefore, we run an experiment where we fix the number of epochs to $2000$ considered high for a set of runs and use early stopping for another set in order to verify if overfitting is an issue. On average across all 20 nodes datasets, the early stopping strategy output an ordering diverge of $9.5$ whilst overfitting is at $11.1$ showing that the method does not benefit from overfitting.

\subsection{Optimal $t$ for score estimation}
\label{sec:optimalt}

As noted by \citet{Vincent2011AAutoencoders}, the best approximation of the score by a learned denoising function is when the training signal-to-noise (SNR) ratio is low. In diffusion model training, $t = 0$ corresponds to the coefficient with lowest SNR. However, we found empirically that the best score estimate varies somehow randomly across different values of $t$. Therefore, we run the the leaf finding function (Equation \ref{eq:find_leaf}) $N$ times for values of $t$ evenly spaced in the $[0,T]$ interval and choose the best leaf based on majority vote. We show in Figure \ref{fig:majority_voting} that majority voting is a better approach than choosing a constant value for $t$.


\begin{figure}[ht]

\begin{center}
\includegraphics[]{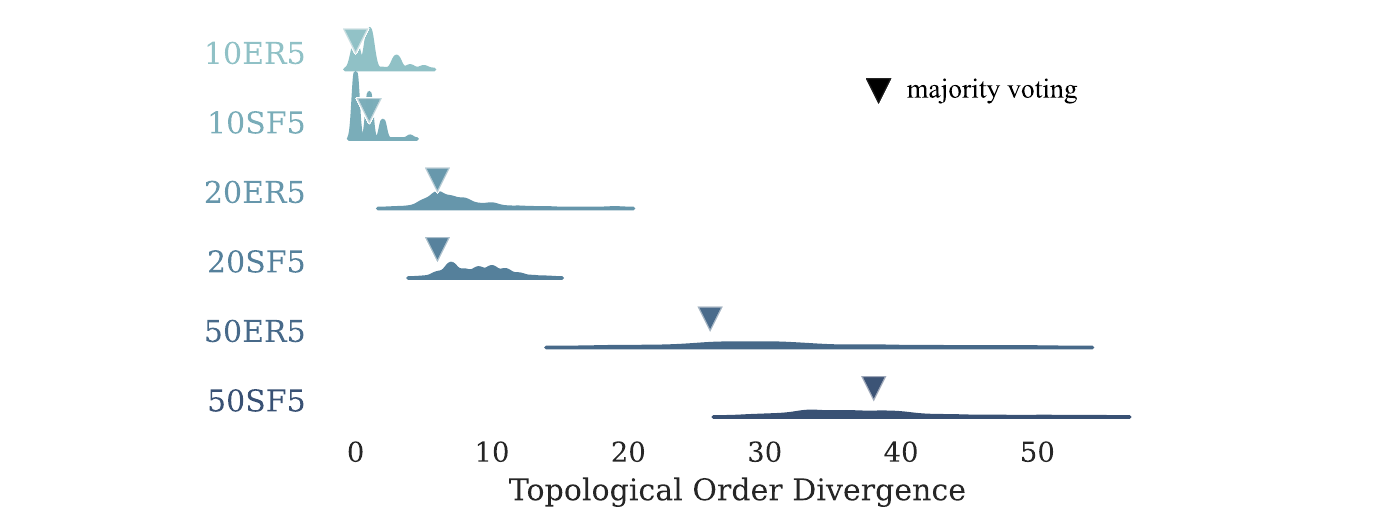}
\end{center}
\caption{The distribution of order divergence measured for different values of $t$ is highly variable. Therefore, we show that we obtain a better approximation with majority voting.}
\label{fig:majority_voting}
\end{figure}

\subsection{Ablations of DiffAN Masking and Greedy}
\label{sec:version_diff}

We now verify how DiffAN masking and DiffAN greedy compare against the original version detailed in the main text which computes the deciduous score. Here, we use the same datasets decribed in Section \ref{sec:exp_synthetic} which comprises 4 (20ER1, 20ER5, 20SF1, 20SF5) synthetic dataset types with 27 variations over seeds, noise type and noise scale.

\textbf{DiffAN Greedy.}~A greedy version of the algorithm re-trains the $\diffusionmodel$ after each leaf removal iteration. In this case, the deciduous score is not computed, decreasing the complexity w.r.t.~$d$ but increasing w.r.t.~$n$. DiffAN greedy has $O(nd^2)$ ordering complexity.

We observe, in Figure \ref{fig:sid_diffans}, that the greedy version performs the best but it is the slowest, as seen in Figure \ref{fig:runtime_diffans}. DiffAN masking

\begin{figure}[ht]

\begin{center}
\includegraphics[]{ 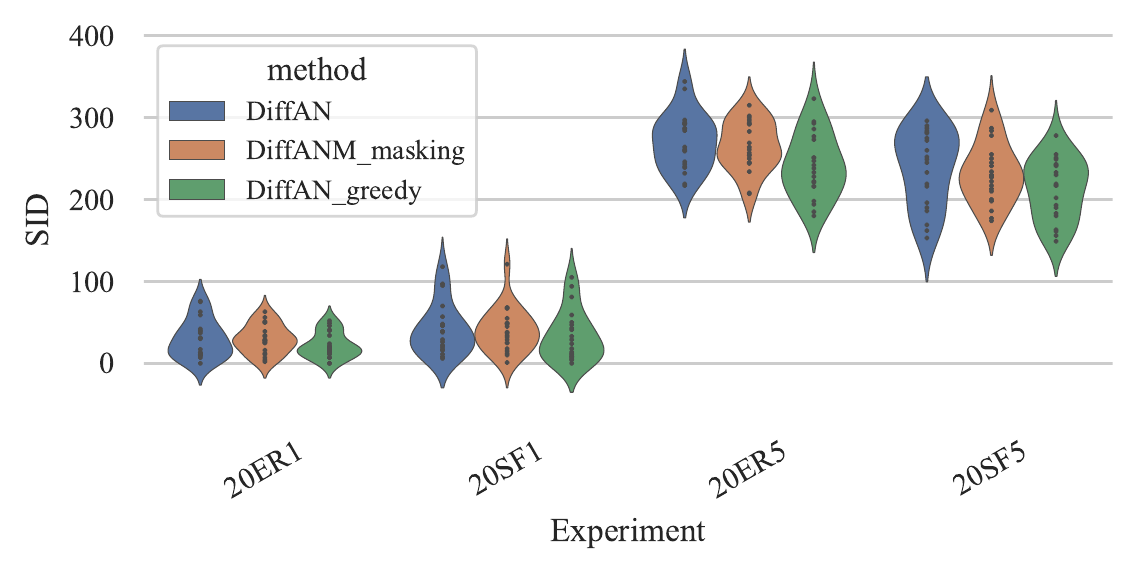}
\end{center}
\caption{SID metric for different versions of DiffAN.}
\label{fig:sid_diffans}
\end{figure}

\begin{figure}[ht]

\begin{center}
\includegraphics[]{ 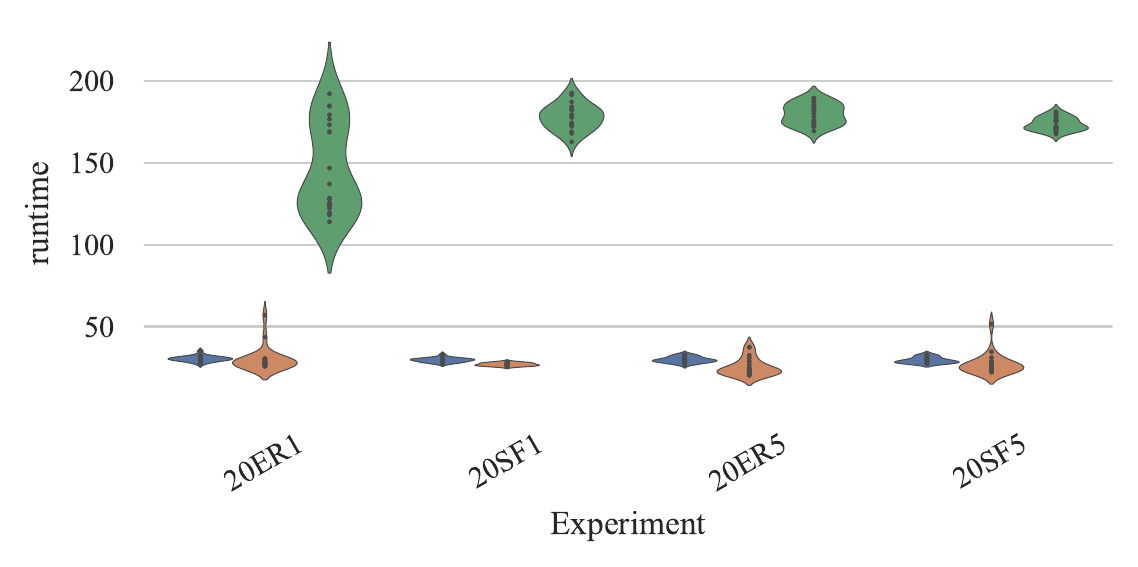}
\end{center}
\caption{Runtime in seconds for different versions of DiffAN.}
\label{fig:runtime_diffans}
\end{figure}

\subsection{Detailed Results}
\label{sec:detailed_results}

We present the numerical results for the violinplots in Section \ref{sec:exp_synthetic} in Tables \ref{tab:er_syn} and \ref{tab:sf_syn}. The results are presented in $\text{mean}_{\text{std}}$ with statistics acquired over experiments with 3 seeds.

\begin{table}[h!]

\caption{ Erd\"os-R\'enyi (ER) graphs.}
\label{tab:er_syn}
\centering
\begin{tabular}{llllll}
\toprule
Exp Name & noisetype & method &          shd        &       sid            & runtime (seconds)\\
\midrule
20ER1 & exp & CAM &    $6.50_{3.21}$ &   $35.50_{27.54}$ &  $112.17_{69.57}$ \\
      &         & DiffAN &    $7.67_{4.68}$ &   $38.83_{24.94}$ &    $30.07_{1.60}$ \\
      &         & DiffAN Greedy &    $6.67_{4.68}$ &   $34.17_{18.37}$ &   $121.66_{4.19}$ \\
      &         & GranDAG &   $12.83_{5.60}$ &   $65.17_{21.99}$ &  $341.69_{74.82}$ \\
      &         & SCORE &    $4.17_{2.86}$ &   $17.50_{11.26}$ &   $50.49_{41.87}$ \\
      & gauss & CAM &    $1.33_{1.03}$ &     $6.50_{6.69}$ &   $83.27_{11.75}$ \\
      &         & DiffAN &    $8.17_{4.02}$ &   $41.67_{24.21}$ &    $30.98_{2.28}$ \\
      &         & DiffAN Greedy &    $3.33_{2.16}$ &   $18.33_{11.40}$ &   $129.10_{9.54}$ \\
      &         & GranDAG &   $10.50_{8.62}$ &   $44.50_{37.90}$ &  $304.07_{17.39}$ \\
      &         & SCORE &    $8.67_{4.13}$ &   $41.67_{21.54}$ &    $24.77_{0.20}$ \\
      & laplace & CAM &    $0.83_{0.98}$ &     $4.00_{6.96}$ &   $90.85_{15.49}$ \\
      &         & DiffAN &    $2.33_{2.07}$ &     $8.67_{4.68}$ &    $31.01_{1.18}$ \\
      &         & DiffAN Greedy &    $3.00_{1.67}$ &    $14.33_{4.76}$ &  $175.30_{19.71}$ \\
      &         & GranDAG &   $10.33_{5.82}$ &   $41.17_{24.98}$ &  $302.00_{16.23}$ \\
      &         & SCORE &    $4.17_{3.43}$ &   $24.17_{19.57}$ &    $26.62_{1.88}$ \\
20ER5 & exp & CAM &   $60.67_{9.77}$ &  $240.00_{33.02}$ &  $105.65_{27.49}$ \\
      &         & DiffAN &   $67.50_{4.32}$ &  $278.83_{22.83}$ &    $30.33_{2.12}$ \\
      &         & DiffAN Greedy &   $63.00_{6.07}$ &  $266.33_{50.25}$ &   $186.21_{3.28}$ \\
      &         & GranDAG &   $80.17_{7.99}$ &  $304.17_{26.90}$ &  $310.24_{13.71}$ \\
      &         & SCORE &   $55.83_{7.55}$ &  $190.67_{32.91}$ &    $18.41_{3.73}$ \\
      & gauss & CAM &   $64.67_{5.85}$ &  $214.67_{11.11}$ &  $130.82_{26.10}$ \\
      &         & DiffAN &   $68.50_{8.12}$ &  $289.00_{42.31}$ &    $29.58_{1.99}$ \\
      &         & DiffAN Greedy &   $63.17_{6.18}$ &  $228.83_{32.86}$ &   $178.17_{7.17}$ \\
      &         & GranDAG &   $81.00_{6.72}$ &  $273.00_{46.72}$ &  $319.17_{18.81}$ \\
      &         & SCORE &   $63.50_{6.69}$ &  $233.83_{39.65}$ &    $21.29_{6.63}$ \\
      & laplace & CAM &   $68.00_{7.04}$ &  $228.50_{27.86}$ &  $157.70_{22.55}$ \\
      &         & DiffAN &   $68.83_{7.86}$ &  $248.67_{30.36}$ &    $29.52_{1.73}$ \\
      &         & DiffAN Greedy &   $67.33_{8.55}$ &  $238.00_{35.19}$ &   $176.71_{4.80}$ \\
      &         & GranDAG &   $82.67_{6.47}$ &  $271.33_{35.19}$ &  $305.29_{13.93}$ \\
      &         & SCORE &   $66.33_{7.09}$ &  $218.67_{25.81}$ &    $23.01_{4.33}$ \\
\bottomrule
\end{tabular}

\end{table}

\begin{table}[h!]

\caption{ Scale Free (SF) graphs. }
\label{tab:sf_syn}
\centering
\begin{tabular}{llllll}
\toprule
Exp Name & noisetype & method &          shd        &       sid            & runtime (seconds)\\
\midrule
20SF1 & exp & CAM &    $7.17_{2.86}$ &   $32.67_{19.54}$ &   $85.08_{27.71}$ \\
      &         & DiffAN &    $8.67_{2.80}$ &   $38.67_{20.99}$ &    $29.23_{1.36}$ \\
      &         & DiffAN Greedy &    $9.17_{2.32}$ &   $35.33_{17.26}$ &   $183.80_{4.81}$ \\
      &         & GranDAG &   $15.50_{2.59}$ &   $68.67_{24.19}$ &   $298.88_{4.96}$ \\
      &         & SCORE &    $5.83_{5.04}$ &   $23.50_{23.65}$ &    $26.13_{1.39}$ \\
      & gauss & CAM &    $1.83_{2.23}$ &    $9.17_{10.21}$ &   $84.46_{19.07}$ \\
      &         & DiffAN &    $9.83_{6.24}$ &   $56.67_{41.59}$ &    $30.19_{0.92}$ \\
      &         & DiffAN Greedy &    $7.33_{5.32}$ &   $46.83_{38.42}$ &   $175.63_{8.82}$ \\
      &         & GranDAG &   $12.17_{2.71}$ &   $46.67_{16.22}$ &  $299.25_{13.14}$ \\
      &         & SCORE &    $8.17_{4.45}$ &   $41.67_{28.75}$ &    $26.29_{1.83}$ \\
      & laplace & CAM &    $2.83_{3.92}$ &    $7.83_{10.93}$ &   $85.15_{21.84}$ \\
      &         & DiffAN &    $6.00_{3.74}$ &   $24.17_{16.18}$ &    $29.68_{2.00}$ \\
      &         & DiffAN Greedy &    $4.83_{3.66}$ &   $17.17_{17.47}$ &   $177.05_{8.82}$ \\
      &         & GranDAG &    $9.50_{3.27}$ &   $35.67_{20.84}$ &   $295.12_{2.88}$ \\
      &         & SCORE &    $5.67_{3.50}$ &   $22.50_{15.27}$ &    $26.39_{1.79}$ \\
20SF5 & exp & CAM &   $47.83_{5.78}$ &  $228.83_{53.89}$ &  $113.82_{13.32}$ \\
      &         & DiffAN &  $46.83_{11.48}$ &  $243.50_{34.68}$ &    $30.03_{2.17}$ \\
      &         & DiffAN Greedy &   $43.50_{6.89}$ &  $236.83_{23.17}$ &   $173.13_{3.52}$ \\
      &         & GranDAG &   $60.67_{8.80}$ &  $275.00_{22.56}$ &   $284.34_{5.33}$ \\
      &         & SCORE &   $38.50_{9.14}$ &  $180.33_{57.44}$ &    $19.79_{2.92}$ \\
      & gauss & CAM &   $46.83_{8.11}$ &  $199.17_{53.13}$ &  $130.41_{20.55}$ \\
      &         & DiffAN &   $50.83_{4.62}$ &  $259.50_{47.54}$ &    $28.58_{1.16}$ \\
      &         & DiffAN Greedy &   $45.17_{6.15}$ &  $224.17_{41.80}$ &   $174.90_{3.69}$ \\
      &         & GranDAG &   $61.67_{4.03}$ &  $241.67_{43.11}$ &   $292.77_{7.29}$ \\
      &         & SCORE &   $44.50_{5.24}$ &  $217.83_{50.05}$ &    $19.77_{2.51}$ \\
      & laplace & CAM &   $49.50_{9.01}$ &  $191.33_{27.43}$ &  $160.53_{23.16}$ \\
      &         & DiffAN &   $52.00_{5.66}$ &  $230.67_{50.66}$ &    $29.05_{1.60}$ \\
      &         & DiffAN Greedy &   $47.00_{9.94}$ &  $191.33_{26.16}$ &   $173.42_{3.52}$ \\
      &         & GranDAG &   $65.00_{7.13}$ &  $262.50_{44.23}$ &   $295.40_{7.72}$ \\
      &         & SCORE &  $46.67_{10.03}$ &  $193.83_{34.14}$ &   $43.84_{27.14}$ \\
\bottomrule
\end{tabular}
\end{table}

\end{document}